
\documentclass{jmlr-sanitized}

\def\ind{\mathbf{1}}
\def\N{\mathbb{N}_{+}}
\def\R{\mathbb{R}}
\def\E{\mathbf{E}}
\def\Prob{\mathbf{P}}
\def\c{\mathsf{c}}
\def\hypothesis{($\mathrm{H}$)}
\def\induction{(\textasteriskcentered)\xspace}
\def\half{\tfrac{1}{2}}


\title{\vspace{-18pt}On the Impossibility of Learning the Missing Mass\\\vspace{6pt}}
\usepackage{times}




 \author{\Name{Elchanan Mossel} \thanks{Supported by NSF grants DMS 1106999 and CCF 1320105, by DOD ONR grant N00014-14-1-0823, and by grant 328025 from the Simons Foundation.}\footnotemark[3]
 \Email{mossel@gmail.com}\\
 \addr Department of Statistics, The Wharton School, University of Pennsylvania \\
 Departments of Statistics and Computer Science, University of California, Berkeley
 \AND  
 \Name{Mesrob I. Ohannessian}\thanks{Supported by funds from the California Institute for Telecommunications and Information Technology (Calit2).} \thanks{This work was conducted when both authors were visiting the Information Theory Program, Jan 13 -- May 15, 2015, at the Simons Institute for the Theory of Computing, University of California, Berkeley.}
 \Email{mesrob@gmail.com}\\
 \addr University of California, San Diego
 }

 %



\begin{document}

\maketitle

\vspace{-12pt}
\begin{abstract}
This paper shows that one cannot learn the probability of rare events without imposing further structural assumptions. The event of interest is that of obtaining an outcome outside the coverage of an i.i.d.\ sample from a discrete distribution. The probability of this event is referred to as the ``missing mass''. The impossibility result can then be stated as: the missing mass is not distribution-free PAC-learnable in relative error. The proof is semi-constructive and relies on a coupling argument using a dithered geometric distribution. This result formalizes the folklore that in order to predict rare events, one necessarily needs distributions with ``heavy tails''.
\end{abstract}

\vspace{6pt}

\begin{keywords}
Missing mass, rare events, Good-Turing, light tails, heavy tails
\end{keywords}

\section{Introduction}

Given data consisting of $n$ i.i.d.\ samples $X_1,\cdots,X_n$ from an unknown distribution $p$ over the integers $\N$, we traditionally compute the \emph{empirical distribution}:
$$
  \hat{p}_n(x) := \frac{1}{n} \sum_{i=1}^{n} \ind\{X_i=x\}.
$$

To estimate the probability $p(E)$ of an event $E\subset \N$, we could use $\hat{p}_n(E)$. This works well for abundantly represented events, but not as well for rare events. An unequivocally rare event is the set of symbols that are \emph{missing} in the data,
$$
  E_n := \{x\in \N: \hat{p}(x)=0\}.
$$
The probability of this (random) event is denoted by the \emph{missing mass}:
$$
  M_n(X_1,\cdots,X_n) := p(E_n) = \sum_{x\in\N} p(x) \ind\{\hat{p}(x)=0\}.
$$

The question we strive to answer in this paper is: ``Can we learn the missing mass when $p$ is an arbitrary distribution on $\N$?'' Definition \ref{def:PAC-learning} phrases this precisely in the PAC-learning framework.

\begin{definition} \label{def:PAC-learning}
An \emph{estimator} is a sequence of functions $\hat{M}_n(x_1,\cdots,x_n): \N^n \to [0,1]$. We say that an estimator \emph{PAC-learns} the missing mass in relative error with respect to a family $\mathcal P$ of distributions, if for every $p\in\mathcal P$ and every $\epsilon,\delta>0$ there exists $n_0(p,\epsilon,\delta)$ such that for all $n>n_0(p,\epsilon,\delta)$:
$$
  \Prob_p\left\{\left|\frac{\hat{M}_n(X_1,\cdots,X_n)}{M_n(X_1,\cdots,X_n)}-1\right|<\epsilon\right\}>1-\delta.
$$
The learning is said to be \emph{distribution-free}, if $\mathcal P$ consists of \emph{all} distributions on $\N$.
\end{definition}

Our question thus becomes: Can we distribution-free PAC-learn the missing mass in relative error? It is obvious that the empirical estimator $\hat p(E_n)$ gives us the trivial answer of $0$, and cannot learn the missing mass. A popular alternative is the Good-Turing estimator of the missing mass, which is the fraction of singletons in the data:
$$
  G_n := \sum_{x\in\N} \frac{1}{n} \ind\{n\hat{p}(x)=1\}.
$$

The Good-Turing estimator has many interpretations. Its original derivation by \citet{Good1953} uses an empirical-Bayes perspective. It can also be thought of as a leave-one-out cross-validation estimator, which contributes to the missing set if and only if the holdout appears exactly once in the data. Fundamentally, $G_n$ derives its form and its various properties from the simple fact that:
$$
  \E[G_n]= \sum_{x\in\N} p(x)(1-p(x))^{n-1} = \E[M_{n-1}].
$$

A study of $G_n$ in the PAC-learning framework was first undertaken by \citet{McAllesterSchapire2000} and continued later by \citet{McAllesterOrtiz2003}. Some further refinement and insight was also given later by \citet{BerendKontorovich}. These works focused on additive error. \citet{OhannessianDahleh2012} shifted the attention to relative error, establishing the PAC-learning property of the Good-Turing estimator with respect to the family of heavy-tailed (roughly power-law) distributions, e.g. $p(x) \propto x^{-1/\alpha}$ with $\alpha\in(0,1)$. This work also showed that Good-Turing \emph{fails} to learn the missing mass for geometric distributions, and therefore does not achieve distribution-free learning. More recently, \citet{BenHamou2014} provide a comprehensive and tight set of concentration inequalities, which can be interpreted in the current PAC framework, and which further demonstrate that Good-Turing can PAC-learn with respect to heavier-than-geometric light tails, e.g. the family that includes $p(x) \propto 2^{-x^\alpha}$ with $\alpha\in(0,1)$ in addition to power-laws.

These results leave open the important question of whether there exists some \emph{other} estimator that can PAC-learn the missing mass in relative error in a distribution-free fashion (i.e. for \emph{any} distribution $p$). Our main contribution is to prove that there are no such estimators.

The first insight to glean from this impossibility result is that one is justified to use further structural assumptions when learning about rare events. Furthermore, the proof relies on an implicit construction that uses a dithered geometric distribution. In doing so, it shows that the failure of the Good-Turing estimator for light-tailed distributions is not a weakness of the procedure, but is rather due to a fundamental barrier. Conversely, the success of Good-Turing for heavier-than-geometric and power laws shows its universality, in some restricted sense. In particular, in concrete support to folklore \citep[e.g.][]{Taleb2008}, we can state that for estimating probabilities of rare events, heavy tails are both necessary and sufficient.

The paper is organized as follows. In Section \ref{sec:main}, we present our main result, with a detailed exposition of the proof. In Section \ref{sec:discussion} we give an immediate extension to continuous tail estimation, show that parametric light-tailed learning is possible, comment further on the Good-Turing estimator, and concisely place this result in the context of a chief motivating application, that of computational linguistics. Lastly, we conclude in Section \ref{sec:summary} with a summary and open questions.

\subsubsection*{Notation}
We use the shorthand $M_n=M_n(X_1,\cdots,X_n)$ for the missing mass and $\hat{M}_n=\hat{M}_n(X_1,\cdots,X_n)$ for its estimator, keeping implicit their dependence on the samples and, in the case of $M_n$, on the distribution $p$.

\section{Main Result} \label{sec:main}

Our main result is stated as follows. The rest of this section is dedicated to its detailed proof.

\begin{theorem} \label{thm:main}
There exists a positive $\epsilon>0$ and a strictly increasing sequence $(n_k)_{k=1,2,\cdots}$, such that for every estimator $\hat{M}_n$ there exists a distribution $p^\star$, such that for all $k$:
\begin{equation} \label{eq:divergence}
  \Prob_{p^\star}\left\{\left|\frac{\hat{M}_{n_k}}{M_{n_k}}-1\right|>\epsilon\right\}>\epsilon.
\end{equation}
In particular, it follows that it is impossible to perform distribution-free PAC-learning of the missing mass in relative error.
\end{theorem}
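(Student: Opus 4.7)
Since the theorem demands a single $p^\star$ failing at \emph{every} $n_k$ in a universal sequence, I would combine a Le Cam/coupling argument (strengthened to a many-point version at each scale) with a measure-theoretic union bound across $k$. The central object, as the introduction anticipates, is a \emph{dithered geometric}: a family $\{p_U\}$ of distributions on $\N$ indexed by a continuous dither $U$ on a logarithmic range, whose members are nearly indistinguishable from $n$-samples on small $U$-scales but whose missing masses differ multiplicatively with $U$.

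\textbf{Many-point indistinguishability and sensitivity.} The construction would take $p_U$ to be a geometric-type distribution whose scale (or support threshold) is multiplicatively dithered by $U$, together with two key lemmas: (i) a coupling --- for $|U-U'|$ small relative to $1/n$, $n$-samples from $p_U$ and $p_{U'}$ agree with probability bounded below, because the total variation between $p_U^{\otimes n}$ and $p_{U'}^{\otimes n}$ on the typical support is small; and (ii) sensitivity --- the missing mass $M_n(p_U)$, viewed as a function of $U$ on a common coupled sample, scales multiplicatively with $U$, so even a small multiplicative change in $U$ moves $M_n$ outside any $(1\pm\epsilon)$ window. Combining these, the natural range of $U$ partitions into $\Theta(\log n)$ cells that are pairwise indistinguishable from samples yet produce pairwise $(1+\epsilon)$-separated missing masses, so that any single estimator output is compatible with at most one cell.

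\textbf{From per-scale to a single $p^\star$, and main obstacle.} Fixing any estimator $\hat M_n$ and taking $U$ uniform on its range, the expected success probability at $n_k$ is then $O(1/\log n_k)$. Choosing $n_k$ to grow rapidly --- e.g.\ $n_k = \exp(k^3)$ --- makes these probabilities summable; Markov's inequality then yields that for each $k$ the set of dithers $U$ with conditional failure probability $\leq \epsilon$ has measure $O(1/k^3)$, and a union bound over $k \geq k_0$ (for $k_0$ large enough) produces a positive-measure set of $U^\star$ for which $p^\star = p_{U^\star}$ fails at \emph{every} $n_k$. The main obstacle I anticipate is realizing the indistinguishability-sensitivity dichotomy with \emph{uniform} constants across every scale $n_k$; this requires the dither to interact self-similarly with the geometric's multiplicative structure, which is precisely why a \emph{dithered} geometric, rather than a plain one, is essential. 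Once the scale-invariant two-sided lemma is in hand, the extraction of a single $p^\star$ is largely mechanical.
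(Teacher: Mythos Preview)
Your high-level strategy---a dithered geometric family, with a coupling that makes samples coincide while missing masses separate---matches the paper's. The divergence is in how you pass from a per-scale indistinguishability bound to a single $p^\star$ failing at every $n_k$: you propose a continuous scalar dither $U$, a many-point Le Cam argument giving average success $O(1/\log n_k)$, and then a union bound on the $U$-space across $k$.

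The gap is the many-point step. Once the coupling's pivotal event fixes the sample, the set of missing symbols is fixed, and $M_{n_k}(p_U)$ is a linear functional of $p_U$ on that set. For a dither that perturbs the mass split at the single level near the coverage boundary (the only level where a perturbation is simultaneously invisible in $n_k$ samples and relevant to $M_{n_k}$), the missing mass ranges over a \emph{bounded} multiplicative interval---in the paper's parametrization, $[(1+\beta),(2-\beta)]\cdot 2^{-m-k}$---so the number of pairwise $(1+\epsilon)$-separated values is $O(1/\epsilon)$, not $\Theta(\log n_k)$. If instead you let $U$ modulate several levels at once to widen that range, the alternatives no longer share a common sample with probability bounded away from zero, and the coupling collapses. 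Either way you get at most a constant number of indistinguishable-yet-separated alternatives per scale; the averaged success probability is then only bounded away from $1$ by a constant, the per-scale ``good $U$'' sets do not have summable measure, and no union bound over $k$ can isolate a single $U^\star$.

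The paper circumvents this by making the dither \emph{infinite-dimensional and discrete}, $\theta\in\{\beta,1-\beta\}^{\N}$, with $\theta_k$ controlling only the split at level $m+k$, and then building $\theta^\star$ \emph{inductively}. At step $k$, a two-hypothesis coupling (between $\theta_k=\beta$ and $\theta_k=1-\beta$, with $\theta^\star_1,\dots,\theta^\star_{k-1}$ already fixed) shows that at least one of the two choices forces failure at $n_k$ \emph{for every setting of the trailing coordinates} $\theta_{k+1},\theta_{k+2},\ldots$. That last clause is what replaces your summability requirement: it lets the construction proceed one scale at a time with only two alternatives per step, and no averaging or Borel--Cantelli argument is needed. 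The obstacle you flagged---scale-uniform constants---is real, but its resolution is this sequential degree-of-freedom structure rather than a sharper per-scale bound.
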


\begin{remark}
Our proof below implies the statement of the theorem with $\epsilon=10^{-4}$ and $n_k=6.5 \cdot 2^k$, but we did not make an honest effort to optimize these parameters.
\end{remark}

\subsection{Proof Outline}

Consider the family $\mathcal{P}_{\beta,m}$ of $\beta$-dithered geometric$(\half)$ distributions, where the mass of each outcome beyond a value $m$ of a $\textrm{geometric}(\half)$ random variable is divided between two sub-values, with a fraction $\beta$ in one and $1-\beta$ in the other. More precisely:
\begin{definition}
The $\beta$-dithered geometric$(\half)$ family is a collection of distributions parametrized by the dithering choices $\theta\in\{\beta,1-\beta\}^\N$, with $\beta\in(0,\half)$, as follows:
\begin{multline}\label{eq:dithered-family}
  \mathcal{P}_{\beta,m} = \left\{ p_\theta : p_\theta(x) = \frac{1}{2^x}, x=1,\cdots,m; \right. \\ \left. p_\theta(m+2j-1) = \frac{\theta_j}{2^{m+j}},\ p_\theta(m+2j) = \frac{1-\theta_j}{2^{m+j}},\ j\in\N,\ \theta\in\{\beta,1-\beta\}^{\N}  \right\}.
\end{multline}
\end{definition}

The intuition of the proof of Theorem \ref{thm:main} is that within such light-tailed families, two distributions may have very similar samples and thus estimated values, yet have significantly different true values of the missing mass. This follows the general methodology of many statistical lower bounds. We now state the outline of the proof. We choose a subsequence of the form $n_k=C2^k$. We set $\beta=1/4$, $m=1$, and $C=6.5$. The value of $\epsilon>0$ is made explicit in the proof, and depends only on these choices. We proceed by induction.
\begin{itemize}
  \item We show that there exists $\theta^\star_1$ such that for all $\theta$ with $\theta_1=\theta^\star_1$ we have for $n=n_1$:
    \begin{equation} \label{eq:bounded-away}
    \Prob_{p_\theta}\left\{\left|\tfrac{\hat{M}_{n}}{M_{n}}-1\right|>\epsilon\right\}>\epsilon.
    \end{equation}
  \item Then, at every step $k>1$ :
  \begin{itemize}
    \item[\hypothesis] We start with $(\theta^\star_1,\cdots,\theta^\star_{k-1})$ such that for all $\theta$ with $(\theta_1,\cdots,\theta_{k-1}) =$ $(\theta^\star_1,\cdots,\theta^\star_{k-1})$, Inequality \eqref{eq:bounded-away} holds for $n=n_1,\cdots,n_{k-1}$.
   \item[\induction] We then show that it must be that for at least one of $\tilde \theta =\beta$ or $\tilde \theta=1-\beta$, for all $\theta$ with $(\theta_1,\cdots,\theta_k) = (\theta^\star_1,\cdots,\theta^\star_{k-1},\tilde \theta)$, Inequality \eqref{eq:bounded-away} holds additionally for $n=n_k$. We select $\theta^\star_k$ to be the corresponding $\tilde\theta$.
  \end{itemize}
  \item This induction produces an infinite sequence $\theta^\star\in\{\beta,1-\beta\}^\N$, and the desired distribution in Theorem \ref{thm:main} can be chosen as $p^\star=p_{\theta^\star}$, since it is readily seen to satisfy the claim for each $n_k$, by construction.
\end{itemize}

\subsection{Proof Details}

We skip the proof of the base case, since it is mostly identical to that of the induction step. Therefore, in what follows we are given $(\theta^\star_1,\cdots,\theta^\star_{k-1})$ by hypothesis \hypothesis, and we would like to prove that the selection in \induction can always be done. Let us denote the two choices of parameters by
$$
  \theta:= (\theta^\star_1,\theta^\star_{k-1},\beta ,\theta_{k+1},\cdots),
$$
and
$$
  \theta':=(\theta^\star_1,\theta^\star_{k-1},1-\beta,\theta'_{k+1},\cdots),
$$
and let us refer to  $(\theta_{k+1},\cdots)$ and $(\theta'_{k+1},\cdots)$ by the \emph{trailing parameters}. What we show in the remainder of the proof is that with two arbitrary sets of trailing parameters, we cannot have two simultaneous violations of Inequality \eqref{eq:bounded-away} (for both $\theta$ and $\theta'$). That is, we cannot have both:
\begin{equation}\label{eq:failures}
\Prob_{p_\theta}\left\{\left|\tfrac{\hat{M}_{n_k}}{M_{n_k}}-1\right|>\epsilon\right\}\boldsymbol{<}\epsilon
\quad\textrm{ and }\quad
\Prob_{p_{\theta'}}\left\{\left|\tfrac{\hat{M}_{n_k}}{M_{n_k}}-1\right|>\epsilon\right\}\boldsymbol{<}\epsilon.
\end{equation}

This is shown in Lemma \ref{lemma:induction}, in the last portion of this section. To see why this is sufficient to show that the selection in \induction can be done, consider first the case that Inequality \eqref{eq:bounded-away} is upheld for both $\theta$ and $\theta'$ with any two sets of trailing parameters. In this case we can arbitrarily choose $\theta^\star_k$ to be either $\beta$ or $1-\beta$, since the induction step is satisfied. We can therefore focus on the case in which this fails. That is, for either $\theta$ or $\theta'$ a choice of trailing parameters can be made such that Inequality \eqref{eq:bounded-away} with $n=n_k$ is \emph{not} satisfied, and therefore one of the two cases in \eqref{eq:failures} holds [say, for example, for $\theta$]. Fix the corresponding trailing parameters [in this example, $(\theta_{k+1},\cdots)$]. Then, for \emph{any} choice of the \emph{other} set of trailing parameters [in this example, $(\theta'_{k+1},\cdots)$], Lemma \ref{lemma:induction} precludes a violation of Inequality \eqref{eq:bounded-away} for $n=n_k$ by the other choice [in this example, $\theta'$]. Therefore this choice can be selected for $\theta_k$ [in this example, $\theta_k=1-\beta$.]

By using the \emph{coupling} device and restricting ourselves to a \emph{pivotal event}, we formalize the aforementioned intuition that the estimator may not distinguish between two separated missing mass values, and deduce that both statements in \eqref{eq:failures} cannot hold simultaneously.

\subsubsection*{Coupling}

\begin{definition} \label{def:coupling}
A \emph{coupling} between two distributions $p$ and $p'$ on $\N$ is a joint distribution $q$ on $\N^2$, such that the first and second marginal distributions of $q$ revert back to $p$ and $p'$ respectively.
\end{definition}

Couplings are useful because probabilities of events on each side may be evaluated on the joint probability space, while forcing events of interest to occur in an orchestrated fashion. Going back to our induction step and the specific choices $\theta$ and $\theta'$ with arbitrary trailing parameters, we perform the following coupling:
\begin{equation} \label{eq:coupling}
q(x,x')=\left\{
\begin{array}{lcl}
p_\theta(x)=p_{\theta'}(x') &;& \textrm{if } x=x' < m+2k-1; \\
\beta/2^{m+k} &;& \textrm{if } x=x'=m+2k-1,\textrm{or if } x=x'=m+2k, \\
(1-2\beta)/2^{m+k} &;& \textrm{if } x=m+2k,\ x'=m+2k-1; \\
p_\theta(x)p_{\theta'}(x')/2^{m+k} &;& \textrm{if } x,x' > m+2k; \\
0 &;& \textrm{otherwise.}
\end{array}
\right.
\end{equation}

It is easy to verify that $q$ in Equation \eqref{eq:coupling} is a coupling between $p_\theta$ and $p_{\theta'}$ as in Definition \ref{def:coupling}. Note the resulting outcomes. If $X,X'$ are generated according to $q$, then if either is in $\{1, \cdots, m+2k-2\}$ then both values are \emph{identical}. If either is in $\{m+2k+1,\cdots\}$ then so is the other, but otherwise the two values are conditionally independent. If either is in $\{m+2k-1,m+2k\}$, so is the other, and the conditional probability is given by:

\begin{center}
\begin{tabular}{|r||c|c|}
 \hline
$x, x'$ & $m+2k-1$ & $m+2k$ \\
 \hline \hline
 $m+2k-1$ & $\beta$ & $0$\\
 \hline
 $m+2k$ & $1-2\beta$ & $\beta$ \\
 \hline
\end{tabular}
\end{center}

Now consider coupled data $(X_i,X'_i)_{i=1,\cdots,n}$ generated as i.i.d.\ samples from $q$. It follows that, marginally, the $X$-sequence is i.i.d.\ from $p_\theta$, and so is the $X'$-sequence from $p_{\theta'}$. Any event $B$ that is exclusively $X$-measurable or $B'$ that is exclusively $X'$-measurable has the same probability under the coupled measure. That is,
$$
\Prob_{p_\theta}(B)=\Prob_q(B):=q^n(B\times \N^n)
$$
and
$$
\Prob_{p_{\theta'}}(B')=\Prob_q(B'):=q^n(\N^n \times B').
$$
In what follows we work only with coupled data, and use simply the shorthand $\Prob$ to mean $\Prob_q$.

\subsubsection*{Pivotal Event}

The event we would like to work under is that of the coupled samples being identical, while exactly covering the range $1,\cdots,m+2k-1$:
\begin{equation} \label{eq:pivotal}
A_k = \bigcap_{i=1}^{n_k} \{ X_i = X'_i\} ~~\cap~~ \Big\{\{X_1,\ldots,X_{n_k}\} = \{1,\cdots,m+2k-1\}\Big\}.
\end{equation}

The reason $A_k$ interests us is that it encapsulates the aforementioned intuition.
\begin{lemma} \label{lemma:A}
Under event $A_k$, the coupled missing masses are distinctly separated,
$$
\frac{M_{n_k}}{M'_{n_k}} = \frac{2-\beta}{1+\beta},
$$
while any estimator cannot distinguish the coupled samples,
$$
\hat M_{n_k} = \hat M'_{n_k}.
$$
\end{lemma}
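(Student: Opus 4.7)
The plan is to verify the two equalities in Lemma~\ref{lemma:A} more or less by direct inspection; there is no real obstacle, only a careful bookkeeping of what the event $A_k$ implies about the covered symbols and what the dithered definition assigns to the tail.

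First I would dispatch the estimator equality. By definition an estimator is a deterministic function $\hat M_n\colon \N^n\to[0,1]$, and the notation $\hat M_n$ versus $\hat M'_n$ only records that the first is evaluated on $(X_1,\dots,X_{n_k})$ and the second on $(X'_1,\dots,X'_{n_k})$. On the event $A_k$ we have $X_i=X'_i$ for every $i\le n_k$, so the two argument vectors coincide and hence $\hat M_{n_k}=\hat M'_{n_k}$ pointwise on $A_k$. This uses nothing beyond the definition of an estimator as a function of the sample.

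Second I would compute the two missing masses. Under $A_k$ the set of \emph{covered} symbols is exactly $\{1,\dots,m+2k-1\}$, identically for both coupled sequences, so the missing set is $\{m+2k,m+2k+1,\dots\}$ in both cases. By construction $\theta_k=\beta$ and $\theta'_k=1-\beta$, while the dithered pair $(m+2j-1,m+2j)$ carries total mass $1/2^{m+j}$ for each $j$. Therefore
\begin{equation*}
M_{n_k}=p_\theta(m+2k)+\sum_{j\ge 1}\frac{1}{2^{m+k+j}}=\frac{1-\beta}{2^{m+k}}+\frac{1}{2^{m+k}}=\frac{2-\beta}{2^{m+k}},
\end{equation*}
and analogously
\begin{equation*}
M'_{n_k}=\frac{\beta}{2^{m+k}}+\frac{1}{2^{m+k}}=\frac{1+\beta}{2^{m+k}},
\end{equation*}
where the geometric-tail sum $\sum_{j\ge 1}2^{-(m+k+j)}=2^{-(m+k)}$ is the same for $\theta$ and $\theta'$ because the trailing dithering parameters only redistribute mass \emph{within} each pair and therefore leave the pair-totals unchanged. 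Taking the ratio yields $M_{n_k}/M'_{n_k}=(2-\beta)/(1+\beta)$, as claimed.

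The only mild subtlety is noticing that the identity $M_{n_k}/M'_{n_k}=(2-\beta)/(1+\beta)$ holds \emph{regardless} of the trailing parameters $(\theta_{k+1},\dots)$ and $(\theta'_{k+1},\dots)$; this is immediate from the pair-total observation above, but it is what makes the lemma useful in the induction step that follows.
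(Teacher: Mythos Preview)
Your proof is correct and follows essentially the same approach as the paper: first observe that identical samples force $\hat M_{n_k}=\hat M'_{n_k}$, then compute each missing mass as $p(m+2k)$ plus the geometric tail $\sum_{j\ge1}2^{-(m+k+j)}=2^{-(m+k)}$. Your explicit remark that the ratio is independent of the trailing parameters is a useful clarification that the paper leaves implicit.
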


\begin{proof}
The confusion of any estimator is simply due to the fact that under $A_k$, the coupling forces all samples to be identical $X_i=X'_i$, for all $i=1,\cdots,n_k$.
Thus $\hat M_{n_k} = \hat M'_{n_k}$, since estimators only depend on the samples and not the probabilities.

The missing masses, on the other hand, do depend on both the samples and the probabilities and thus they differ. But the event $A_k$ makes the set of missing symbols simply the tail $m+2k,m+2k+1,\cdots$, so we can compute the missing masses exactly:
$$
  M_{n_k} = p_\theta(m+2k)+\sum\nolimits_{x=m+2k+1}^\infty p_\theta(x)=\frac{1-\theta_k}{2^{m+k}}+\frac{1}{2^{m+k}}=(2-\beta)2^{-m-k},\textrm{ and}
$$
$$
  M'_{n_k} = p_{\theta'}(m+2k)+\sum\nolimits_{x=2k+1}^\infty p_{\theta'}(m+x)=\frac{1-\theta'_k}{2^{m+k}}+\frac{1}{2^{m+k}}=(1+\beta)2^{-m-k},
$$
and the claim follows.
\end{proof}

We now show that $A_k$ has always a positive probability, bounded away from zero.
\begin{lemma} \label{lemma:P(A)}
For $\beta=1/4$, $m=1$, $C=6.5$ and $n_k=C 2^k$, there exists a positive absolute constant $\eta>0$ such that for all $k$, $\Prob(A_k)>\eta$. We can explicitly set $\eta=2\cdot 10^{-4}$.
\end{lemma}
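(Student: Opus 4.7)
The plan is to first reduce $A_k$ to a purely marginal event under $p_\theta$ by exploiting the structure of the coupling, and then bound the resulting probability by factoring it into range-containment and coverage pieces.

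The key structural observation is that in the coupling \eqref{eq:coupling}, the only atom with first coordinate $x < m+2k-1$ is the diagonal one $q(x,x) = p_\theta(x)$, and the only atom with first coordinate $x = m+2k-1$ is $q(m+2k-1,m+2k-1) = \beta \cdot 2^{-(m+k)}$. Hence, whenever all $X_i$ lie in $S_k := \{1,\ldots,m+2k-1\}$, the equalities $X_i = X'_i$ are forced automatically, and so I can rewrite
\[
A_k \;=\; R_k \cap C_k, \quad R_k := \bigcap_{i=1}^{n_k}\{X_i \in S_k\}, \quad C_k := \bigcap_{x \in S_k}\{\exists\, i :\ X_i = x\},
\]
an event depending only on $n_k$ i.i.d.\ samples from $p_\theta$.

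Next I would factor $\Prob(A_k) = \Prob(R_k) \cdot \Prob(C_k \mid R_k)$ and bound each factor uniformly in $k$. A geometric-series computation under $m=1$, $\beta = 1/4$ yields $p_\theta(S_k) = 1 - 7\cdot 2^{-(k+3)}$, so $\Prob(R_k) = (1 - 7\cdot 2^{-(k+3)})^{6.5 \cdot 2^k}$. By the usual monotonicity of $(1 - a/n)^n$, this sequence is increasing in $k$ with limit $e^{-5.6875}$, and its minimum at $k=1$ equals $(9/16)^{13} > 5 \cdot 10^{-4}$. For the coverage factor, conditional on $R_k$ the samples are i.i.d.\ from the renormalization $p_{\theta,S_k}(x) := p_\theta(x)/p_\theta(S_k) \geq p_\theta(x)$, so a union bound on the missing-element events yields
\[
\Prob(C_k^c \mid R_k) \leq \sum_{x \in S_k}(1 - p_\theta(x))^{n_k} \leq \sum_{x \in S_k} e^{-n_k p_\theta(x)}.
\]
Re-indexing the pair contributions by $\ell = k - j$ turns the exponents into $6.5 \cdot 2^{\ell - 3}$ and $3 \cdot 6.5 \cdot 2^{\ell - 3}$, producing a doubly-exponentially convergent series; together with the dominant boundary term $e^{-0.8125}$ coming from $x = m+2k-1$, this bounds $\Prob(C_k^c \mid R_k)$ by a $k$-independent constant strictly below $1$, whence $\Prob(C_k \mid R_k)$ stays above a positive absolute constant.

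The main obstacle will be tightness of the numerical constant $\eta = 2 \cdot 10^{-4}$: the two factors are minimized at opposite ends of the $k$-range — $\Prob(R_k)$ at $k=1$, $\Prob(C_k \mid R_k)$ in the limit $k \to \infty$ — so the uniform lower bound must balance these opposing monotonicities. Achieving the claimed constant therefore requires treating small $k$ by direct exact computation (for example, $\Prob(A_1) = (9/16)^{13} - (1/2)^{13} - (1/16)^{13} \approx 4 \cdot 10^{-4}$), while for large $k$ the Poisson-type asymptotic estimate above becomes essentially tight and supplies the needed slack.
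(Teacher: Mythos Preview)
Your proposal is correct and follows essentially the same route as the paper. Both arguments (i) observe that the exact-coverage condition alone forces $X_i=X'_i$ under the coupling, so $A_k$ reduces to a purely marginal event; (ii) factor $A_k$ into localization $R_k$ ($=A_{k,1}$ in the paper) and representation $C_k$ ($=A_{k,2}$); (iii) compute $\Prob(R_k)=\alpha^{n_k}$ exactly; and (iv) lower-bound $\Prob(C_k\mid R_k)$ by a union bound over missing symbols. The paper simply asserts that $\inf_k \eta_1(k)\eta_2(k)\geq 2\cdot 10^{-4}$ ``can be verified,'' whereas you spell out the verification strategy (exact inclusion--exclusion at $k=1$, Poisson-type asymptotics for large $k$); this is a presentational refinement, not a different method.
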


\begin{proof}
Note that $A_k$ in Equation \eqref{eq:pivotal} overspecifies the event. In fact, only forcing the exact coverage of $1,\cdots,m+2k-1$ is sufficient, since this implies in turn that the coupled samples are identical. This is evident for values in $1,\cdots,m+2k-2$. But since $m+2k$ is not allowed in this event, it also holds for the value $m+2k-1$. We can then write $A_k=A_{k,1}\cap A_{k,2}$, dividing the exact coverage to the localization in the range and the representation of each value by at least one sample:
$$
\begin{array}{lcl}
A_{k,1} = \left\{ \bigcup\nolimits_{i=1}^{n_k} \{X_i\} \subseteq \{1,\cdots,m+2k-1\} \right\}&&\ \textrm{(localization),}\\
A_{k,2} = \left\{ \bigcup\nolimits_{i=1}^{n_k} \{X_i\} \supseteq \{1,\cdots,m+2k-1\} \right\}&&\ \textrm{(representation).}
\end{array}
$$

Let $\alpha$ be the probability of $(x,x')$ being in $\{(1,1),\cdots,(m+2k-1,m+2k-1)\}$. From the coupling in Equation \eqref{eq:coupling} and the structure of the dithered family in Equation \eqref{eq:dithered-family}, we see that for up to $m+2k-2$ this probability sums up to the $m+k-1$ first terms of a geometric$(\half)$, and for $(m+2k-1,m+2k-1)$ the coupling assigns it $\beta/2^{m+k}$, thus:
$$
  \alpha = \sum\nolimits_{x=1}^{2k-1} q(x,x) = 1-\frac{1}{2^{m+k-1}} + \frac{\beta}{2^{m+k}}.
$$

We can then explicitly compute:
$$
\Prob(A_{k,1}) = \alpha^{n_k} = \left(1-\frac{1}{2^{m+k-1}}+\frac{\beta}{2^{m+k}}\right)^{n_k} =: \eta_1(k).
$$

Meanwhile, note that conditionally on $A_{k,1}$, the occurrence probabilities on $\{(1,1),\cdots,(m+2k-1,m+2k-1)\}$ are simply normalized by $\alpha$. By using a union bound on the complement of $A_{k,2}$ (the event of at least one of these values not appearing), we then have that:

\begin{eqnarray*}
\Prob(A_{k,2}|A_{k,1})
&\geq& 1 - \sum\nolimits_{x=1}^{m+2k-1} \left[1-q(x,x)/\alpha\right]^{n_k} \\
&\geq& 1 - \sum\nolimits_{x=1}^{m+2k-1} \left[1-q(x,x)\right]^{n_k} \\
&=&    1-\sum\nolimits_{x=1}^{m}\left(1-\tfrac{1}{2^ x}\right)^{n_k}\\
&& \qquad \qquad -\sum\nolimits_{j=1}^{k-1} \left[\left(1-\tfrac{\beta}{2^{m+j}}\right)^{n_k}+\left(1-\tfrac{1-\beta}{2^{m+j}}\right)^{n_k}\right] - \left(1-\tfrac{\beta}{2^{m+k}}\right)^{n_k} \\
&\geq& 1-\sum\nolimits_{x=1}^{m}\left(1-\tfrac{1}{2^ x}\right)^{n_k}-2\sum\nolimits_{j=1}^{k-1} \left(1-\tfrac{\beta}{2^{m+j}}\right)^{n_k} - \left(1-\tfrac{\beta}{2^{m+k}}\right)^{n_k} =: \eta_2(k).
\end{eqnarray*}

Therefore,
$$
\Prob(A_k) = \Prob(A_{k,1}\cap A_{k,2}) = \Prob(A_{k,1})\Prob(A_{k,2}|A_{k,1}) \geq \eta_1(k) \eta_2(k) \geq \inf_{k\geq 1} \eta_1(k) \eta_2(k) =: \eta.
$$

We now use our choices of $\beta=1/4$, $m=1$, $C=6.5$, and $n_k=C2^k$, to bound this worst-case $\eta$. In particular, we can verify that $\eta \geq 2\cdot 10^{-4}$, and it follows as claimed that the pivotal event has always a probability bounded away from zero.
\end{proof}

\subsubsection*{Induction Step}

We now combine all the elements presented thus far to complete the proof of Theorem \ref{thm:main} by establishing the following claim, which we have shown in the beginning of the detailed proof section to be sufficient for the validity of the induction step. In particular, we restate Equation \eqref{eq:failures} under the coupling of Equation \eqref{eq:coupling}.

\begin{lemma} \label{lemma:induction}
Let
$$
  \theta:= (\theta^\star_1,\theta^\star_{k-1},\beta ,\theta_{k+1},\cdots),
\ \textrm{ and }\ 
  \theta':=(\theta^\star_1,\theta^\star_{k-1},1-\beta,\theta'_{k+1},\cdots),
$$
with \emph{arbitrary} trailing parameters $(\theta_{k+1},\cdots)$ and $(\theta'_{k+1},\cdots)$. Let $q$ be the coupling of Equation \eqref{eq:coupling}, and let $B_k=\left\{\left|\hat{M}_{n_k}/M_{n_k}-1\right|>\epsilon\right\}$ and $B_k'=\left\{\left|\hat{M}'_{n_k}/M'_{n_k}-1\right|>\epsilon\right\}$. Then given our choices of $\beta=1/4$, $m=1$, $C=6.5$ and $n_k=C 2^k$, if $\epsilon<10^{-4}$ we cannot simultaneously have
$$
\Prob_q(B_k)<\epsilon\ \textrm{ and }\ \Prob_q(B'_k)<\epsilon.
$$
\end{lemma}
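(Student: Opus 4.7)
The plan is to close the proof by combining the two preceding lemmas via a straightforward contradiction argument built around the pivotal event $A_k$. Lemma \ref{lemma:A} tells us two things that hold pointwise on $A_k$: any estimator must return the same value on the coupled samples, i.e., $\hat M_{n_k} = \hat M'_{n_k}$; and the true missing masses are separated by a fixed multiplicative gap, $M_{n_k}/M'_{n_k} = (2-\beta)/(1+\beta) = 7/5$ under the choice $\beta = 1/4$. Lemma \ref{lemma:P(A)} tells us that $A_k$ carries probability at least $\eta = 2\cdot 10^{-4}$ under the coupling $q$, uniformly in $k$.

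The first key step is to show that $A_k \subseteq B_k \cup B'_k$. Suppose for contradiction that there were a sample realization in $A_k \cap B_k^c \cap B_k'^c$. Writing the common estimator value as $h := \hat M_{n_k} = \hat M'_{n_k}$, the definitions of $B_k^c$ and $B_k'^c$ place $h$ simultaneously in the two relative-error intervals
\[
  h \in \bigl[(1-\epsilon) M_{n_k},\,(1+\epsilon) M_{n_k}\bigr] \,\cap\, \bigl[(1-\epsilon) M'_{n_k},\,(1+\epsilon) M'_{n_k}\bigr].
\]
Since $M_{n_k}/M'_{n_k} = 7/5$, nonemptiness of this intersection forces $(1-\epsilon)/(1+\epsilon) \leq 5/7$, which rearranges to $\epsilon \geq 1/6$. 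For any $\epsilon < 10^{-4}$, this is impossible, so the intersection is empty and the inclusion $A_k \subseteq B_k \cup B'_k$ is established.

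The second step is a union bound plus Lemma \ref{lemma:P(A)}. From the inclusion we get $\Prob_q(A_k) \leq \Prob_q(B_k) + \Prob_q(B'_k)$. If both $\Prob_q(B_k) < \epsilon$ and $\Prob_q(B'_k) < \epsilon$ held simultaneously, the right-hand side would be strictly less than $2\epsilon < 2\cdot 10^{-4} = \eta$. But $\Prob_q(A_k) \geq \eta$ by Lemma \ref{lemma:P(A)}, contradiction. Hence at most one of the two inequalities in \eqref{eq:failures} can hold, which is exactly the claim of the lemma.

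I do not expect any genuine obstacle in this step: the real work has already been done in isolating the pivotal event and bounding its probability. The only two places where $\epsilon$ is constrained are the separation requirement $\epsilon < 1/6$ (coming from the ratio $7/5$) and the probability budget $2\epsilon < \eta$ (coming from the union bound), and with $\eta = 2\cdot 10^{-4}$ the latter is the binding one, which is precisely why the lemma takes the threshold to be $\epsilon < 10^{-4}$. If one wanted to tighten $\epsilon$, the natural route would be to enlarge $\beta$ (widening the missing-mass gap) or to sharpen the lower bound on $\Prob_q(A_k)$ in Lemma \ref{lemma:P(A)}, but neither is needed for the qualitative impossibility statement of Theorem \ref{thm:main}.
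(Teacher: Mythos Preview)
Your proof is correct and follows essentially the same line as the paper's: both combine Lemma~\ref{lemma:A} (coincidence of the estimator and separation of the missing masses on $A_k$) with Lemma~\ref{lemma:P(A)} (uniform lower bound $\Prob_q(A_k)>\eta=2\cdot10^{-4}$) to derive a contradiction from the assumption that both probabilities are below $\epsilon$. The only cosmetic difference is packaging: the paper conditions on $A_k$ and contrasts $\Prob(B_k^\c\cap B_k^{\prime\c}\mid A_k)\geq 1-2\epsilon/\eta>0$ with $\Prob(B_k^\c\cap B_k^{\prime\c}\mid A_k)=0$, whereas you state the equivalent inclusion $A_k\subseteq B_k\cup B_k'$ and then apply a union bound.
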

\begin{proof}
Note that this choice of $\epsilon$ means that $\epsilon<\eta/2$, where $\eta$ is as in Lemma \ref{lemma:P(A)}. Recall the pivotal event $A_k$, and assume, for the sake of contradiction, that both probability bounds $\Prob(B_k)<\epsilon$ and $\Prob(B'_k)<\epsilon$ hold. Note that if $B_k^\c$ holds, it means that
\begin{equation} \label{eq:B}
\hat{M}_{n_k}/M_{n_k} \in (1-\epsilon,1+\epsilon),
\end{equation}
and similarly if $B^{\prime \c}_k$ holds, it means that
\begin{equation} \label{eq:B'}
\hat{M}'_{n_k}/M'_{n_k} \in (1-\epsilon,1+\epsilon).
\end{equation}

By making our hypothesis, we are asserting that these events have high probabilities, $1-\epsilon$, under both $p_\theta$ and $p_{\theta'}$ distributions, and that thus the estimator is effectively $(1\pm\epsilon)$-close to the true value of the missing mass. Yet, we know that this would be violated under the pivotal event, which occurs with positive probability. We now formalize this contradiction.

By Lemma \ref{lemma:P(A)}, we have that:
\begin{equation} \label{eq:absurd}
\left.
\begin{aligned}
\Prob(B_k|A_k) &= \frac{\Prob(A_k\cup B_k)}{\Prob(A_k)} \leq \frac{\Prob(B_k)}{\Prob(A_k)} \leq \frac{\epsilon}{\eta}\\
\Prob(B'_k|A_k) &= \frac{\Prob(A_k\cup B'_k)}{\Prob(A_k)} \leq \frac{\Prob(B'_k)}{\Prob(A_k)} \leq \frac{\epsilon}{\eta}
\end{aligned}
\ \ 
\right\}
\ 
\Rightarrow
\ \ 
\Prob(B_k^\c\cap B^{\prime \c}_k|A_k) \geq 1-2 \frac{\epsilon}{\eta} > 0,
\end{equation}
where the last inequality is strict, by the choice of $\epsilon<\eta/2$.

On the other hand, recall that by Lemma \ref{lemma:A} under $A_k$ we have:
$$
\hat{M}_{n_k} = \hat{M}'_{n_k} \quad\textrm{and}\quad \frac{M_{n_k}}{M'_{n_k}} = \frac{2-\beta}{1+\beta} = \tfrac{7}{5}.
$$

By combining this with Equations \eqref{eq:B} and \eqref{eq:B'}, we can now see that if $\frac{1+\epsilon}{1-\epsilon}<\frac{7}{5}$, which is satisfied by any choice of $\epsilon<1/6$, in particular ours, then if $B_k^\c$ occurs, then $B'_k$ occurs, and conversely if $B^{\prime \c}_k$ occurs then $B_k$ occurs. For example, say $B_k^\c$ occurs, then $\hat{M}_{n_k}/M_{n_k}<(1+\epsilon)$:
$$
\frac{\hat{M}'_{n_k}}{M'_{n_k}} = \frac{\hat{M}_{n_k}}{\tfrac{7}{5}M_{n_k}} = \tfrac{5}{7}(1+\epsilon) < 1-\epsilon,
$$
implying that Equation \eqref{eq:B'} is not satisfied, thus $B'_k$ occurs. The end result is that under event $A_k$, $B_k^\c$ and $B^{\prime \c}_k$ cannot occur at the same time, and thus:
$$
\Prob(B_k^\c\cap B^{\prime \c}_k|A_k)=0.
$$
This contradicts the bound in \eqref{eq:absurd}, and establishes the lemma.
\end{proof}

\section{Discussions} \label{sec:discussion}

\subsection{Generalization to continuous tails}

A closely related problem to learning the missing mass is that of estimating the tail of a probability distribution. In the simplest setting, the data consists of $Y_1,\cdots,Y_n$ that are i.i.d.\ samples from a continuous distribution on $\R$. Let $F$ be the cumulative distribution function. The task in question is that of estimating the tail probability
$$
  W_n = 1-F\left(\max_{i=1}^n ~Y_i \right),
$$
that is the probability that a new sample exceeds the maximum of all samples seen in the data.

One can immediately see the similarity with the missing mass problem, as both problems concern estimating probabilities of underrepresented events. We can use essentially the same PAC-learning framework given by Definition \ref{def:PAC-learning}, and prove a completely parallel impossibility result.

\begin{theorem} \label{thm:tail}
For every estimator $\hat{W}_n$ of $W_n$ there exists a distribution $F^\star$, a positive value $\epsilon>0$, and a subsequence $(n_k)_{k=1,2,\cdots}$, such that for all $k$:
\begin{equation*}
  \Prob_{F^\star}\left\{\left|\frac{\hat{W}_{n_k}}{W_{n_k}}-1\right|>\epsilon\right\}>\epsilon.
\end{equation*}
In particular, it follows that it is impossible to perform distribution-free PAC-learning of the tail probability in relative error.
\end{theorem}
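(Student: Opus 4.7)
The plan is to lift the construction from the proof of Theorem~\ref{thm:main} from $\N$ to $\R$ and rerun the induction. To each $\theta \in \{\beta, 1-\beta\}^{\N}$, associate the continuous distribution $F_\theta$ with density $f_\theta(y) = p_\theta(\lceil y \rceil)\,\ind\{y>0\}$; that is, spread the dithered mass $p_\theta(x)$ uniformly on $[x-1, x)$. Couple $F_\theta$ with $F_{\theta'}$ by drawing $(X_i, X'_i)$ from the discrete coupling $q$ of \eqref{eq:coupling} together with an independent $U_i \sim \mathrm{Unif}[0, 1)$, then setting $Y_i = X_i - 1 + U_i$ and $Y'_i = X'_i - 1 + U_i$. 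Each marginal recovers the correct $F_\theta, F_{\theta'}$, and sharing $U_i$ forces $Y_i = Y'_i$ whenever $X_i = X'_i$, exactly mirroring the mechanism in Section~\ref{sec:main}.

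Next, I would replace the pivotal event by the refinement $\tilde A_k := A_k \cap \{V_k \leq v^*\}$, where $V_k := (m+2k-1) - \max_i Y_i$ and $v^* \in (0, 1)$ is a fixed constant (e.g.\ $v^* = 1/2$). On $A_k$ the max lies in $[m+2k-2, m+2k-1)$, so $V_k \in (0, 1]$, and under $\tilde A_k$ one computes explicitly
\[
W_{n_k} = \frac{2 - \beta + \beta V_k}{2^{m+k}}, \qquad W'_{n_k} = \frac{1 + \beta + (1-\beta)\,V_k}{2^{m+k}},
\]
so the ratio $W_{n_k}/W'_{n_k}$ is a decreasing function of $V_k$ bounded below on $\tilde A_k$ by $\alpha := (2-\beta+\beta v^*)/(1+\beta+(1-\beta)v^*) > 1$. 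For the probability: $A_k$ guarantees at least one sample in $[m+2k-2, m+2k-1)$ whose fractional part is uniform on $[0, 1)$, whence $\Prob(V_k \leq v^* \mid A_k) \geq v^*$ (the maximum of one or more uniforms exceeds $1 - v^*$ with probability at least $v^*$), and therefore $\Prob(\tilde A_k) \geq v^* \eta$, with $\eta$ from Lemma~\ref{lemma:P(A)}.

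The main obstacle relative to Theorem~\ref{thm:main} is precisely this continuous smearing: the mass of $F_\theta$ between $Y_{\max}$ and the top of its enclosing unit interval depends linearly on $V_k$ and would drag $W_{n_k}/W'_{n_k}$ back to $1$ as $V_k \uparrow 1$, rather than giving the clean deterministic ratio $(2-\beta)/(1+\beta) = 7/5$ that the coupled missing masses enjoy under $A_k$ in the discrete case. Intersecting with $\{V_k \leq v^*\}$ is what restores the multiplicative gap, at the cost of only a factor of $v^*$ in the pivotal probability. Once this accommodation is in place, the rest is routine: by the direct analog of Lemma~\ref{lemma:induction} with $W_{n_k}, W'_{n_k}$ and $\tilde A_k$ in place of $M_{n_k}, M'_{n_k}$ and $A_k$, any fixed $\epsilon > 0$ satisfying both $\epsilon < v^* \eta / 2$ and $(1+\epsilon)/(1-\epsilon) < \alpha$ rules out $B_k^\c \cap B^{\prime\c}_k$ under $\tilde A_k$ and thereby precludes simultaneous violations $\Prob(B_k), \Prob(B'_k) < \epsilon$; the induction from Section~\ref{sec:main} then selects an infinite sequence $\theta^\star$ and produces $F^\star = F_{\theta^\star}$.
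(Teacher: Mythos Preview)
Your proposal is correct and follows essentially the same approach as the paper's sketch: you lift the dithered geometric to a piecewise-constant (dithered exponential) density, reuse the discrete coupling augmented with a shared uniform fractional part, and handle the within-block location of the sample maximum---precisely the adjustment the paper flags---by intersecting the pivotal event with $\{V_k\le v^*\}$, which is a concrete realization of the paper's remark that ``coarse bounds can be given by taking the extremities of each block.'' The computations of $W_{n_k}$, $W'_{n_k}$, the ratio bound $\alpha$, and the probability estimate $\Prob(\tilde A_k)\ge v^*\eta$ are all sound, and the induction then goes through verbatim.
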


\begin{proof}[Sketch]
Recall that in the proof of Theorem \ref{thm:main}, the pivotal event forced the missing mass to be a tail probability. Therefore, most of the arguments go through unchanged. Instead of dithering a geometric distribution, we dither an exponential distribution, by shifting the mass in adjacent blocks. Some of the adjustments that need to be performed concern the exact location of the samples within each block, but coarse bounds can be given by taking the extremities of each block instead.
\end{proof}

Theorem \ref{thm:tail} gives a concrete justification of why it is important to make regularity assumptions when extrapolating distribution tails. This is of course the common practice of extreme value theory, \citep[see, for example,][]{Beirlant2004}. Some impossibility results concerning the even more challenging problem of estimating the density of the maximum were already known, \citep{Beirlant1999}, but to the best of our knowledge this is the first result asserting it for tail probability estimation as well.

\subsection{Learning in various families}

\citet{BenHamou2014} (Corollary 5.3) gives a very clean characterization of a sufficient learnable family, which encompasses the one covered by \citet{OhannessianDahleh2012}.

\begin{theorem}[\cite{BenHamou2014}] \label{thm:benhamou} Let $\mathcal H$ be the family of distributions on $\N$ that satisfy
$$
\E\left[\sum_{x\in\N} \ind\{n\hat{p}_n(x)=1\}\right]=\sum_{x\in\N} n p(x)[1-p(x)]^{n-1} \to \infty.
$$
The Good-Turing estimator PAC-learns the missing mass in relative error with respect to $\mathcal H$.
\end{theorem}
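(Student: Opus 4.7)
My plan is to reduce the problem to two kinds of concentration plus an expectation comparison, via the decomposition
$$
\frac{G_n}{M_n} \;=\; \frac{G_n}{\E[G_n]}\cdot\frac{\E[G_n]}{\E[M_n]}\cdot\frac{\E[M_n]}{M_n},
$$
and show that each of the three factors converges to $1$ in probability. The middle ``bias'' factor is purely analytic: starting from the identity $\E[G_n]=\E[M_{n-1}]$ one computes $\E[M_{n-1}]-\E[M_n]=\sum_x p(x)^2(1-p(x))^{n-1}$, and splitting this sum at a small threshold $\tau$ bounds it by $\tau\,\E[G_n]+(1-\tau)^{n-1}$. Since $\phi_n\to\infty$ forces $\E[M_n]$ to dominate both pieces, letting $\tau\to 0$ yields $\E[G_n]/\E[M_n]\to 1$.

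For the two concentration factors I would Poissonize: let $N\sim\mathrm{Poisson}(n)$ be an independent sample count, so that the counts $N_x$ become independent $\mathrm{Poisson}(np(x))$ variables. Under this coupling both
$$
nG_N \;=\; \sum_x \ind\{N_x=1\}\quad\text{and}\quad M_N \;=\; \sum_x p(x)\,\ind\{N_x=0\}
$$
are sums of independent bounded random variables, and a Bernstein-type inequality then delivers
$$
\Prob\!\Bigl\{\bigl|\tfrac{G_N}{\E[G_N]}-1\bigr|>\epsilon\Bigr\}\;\leq\;2e^{-c\epsilon^2\phi_n},\qquad
\Prob\!\Bigl\{\bigl|\tfrac{M_N}{\E[M_N]}-1\bigr|>\epsilon\Bigr\}\;\leq\;2e^{-c\epsilon^2\phi_n},
$$
provided one has a variance estimate of the form $\mathrm{Var}(M_N)\leq C\,\E[G_n]/n$. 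A routine de-Poissonization using $|N-n|=O(\sqrt n)$ with high probability transfers both bounds to the fixed-$n$ sampling scheme. Given $\epsilon,\delta>0$, one then picks $n_0$ so that for $n>n_0$ each of the three factors lies in $(1-\epsilon/3,\,1+\epsilon/3)$ with joint probability at least $1-\delta$.

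The hard step is obtaining the sharp variance bound for $M_n$. A crude bounded-differences (McDiarmid) estimate only gives $\mathrm{Var}(M_n)\leq n(\max_x p(x))^2$, which is useless whenever $p$ carries an atom of non-vanishing mass. The stronger bound $\mathrm{Var}(M_n)\leq C\,\E[G_n]/n$ is what turns the hypothesis $\phi_n\to\infty$ into $o(1)$ relative error; it is most transparent through the Poissonized computation above, but can also be extracted via an Efron--Stein argument (swapping one sample coordinate affects $M_n$ only through the two atoms involved) or through the entropy-method refinements of \citet{BenHamou2014}. Once this variance bound is in place, the remaining steps are routine.
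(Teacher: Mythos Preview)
The paper does not give its own proof of this theorem: it is quoted from \citet{BenHamou2014} (their Corollary~5.3), with only the remarks that ``the proof relies on power moment concentration inequalities (such as Chebyshev's)'' and that ``an additional technical condition'' in the original ``can be relaxed.'' Your outline is therefore considerably more detailed than anything that appears here, and structurally it matches what \citet{BenHamou2014} actually do: the three-factor decomposition $G_n/M_n=(G_n/\E G_n)(\E G_n/\E M_n)(\E M_n/M_n)$, the analytic control of the bias via $\E[G_n]=\E[M_{n-1}]$, and separate concentration for $G_n$ and $M_n$.

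Two points of comparison are worth making. First, you reach for Poissonization and Bernstein-type exponential tails, whereas the paper's hint---and the cited corollary---use only second-moment (Chebyshev) bounds. Since PAC-learning requires merely convergence in probability, Chebyshev is already enough once the variance is controlled; your route gives stronger tail estimates at the cost of extra de-Poissonization bookkeeping. Second, the clean inequality $\mathrm{Var}(M_N)\le C\,\E[G_n]/n$ that you posit does not hold with a \emph{universal} constant (a single large atom breaks it), and pinning down the right variance proxy for $M_n$ under only the hypothesis $\phi_n\to\infty$ is exactly where the ``additional technical condition'' that the paper alludes to lives. You are right that this is the hard step and that the Efron--Stein and entropy-method arguments of \citet{BenHamou2014} are the correct tools; but the bound you wrote should be regarded as a heuristic target rather than a literal inequality, and the careful version compares $\sum_x p(x)^2(1-p(x))^n$ directly to $(\E M_n)^2$ rather than to $\E[G_n]/n$.
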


Note that this theorem in the cited paper asks for an additional technical condition, but this can be relaxed. The proof relies on power moment concentration inequalities (such as Chebyshev's). For us, this is instructive because one could readily verify that the condition of Theorem \ref{thm:benhamou} fails for geometric (and dithered geometric) distributions. We can thus see that in some sense Good-Turing captures a maximal family of learnable distributions. In particular, we now know that the complement of $\mathcal H$ is not learnable.

Considering how sparse the dithered geometric family is, the failure of any estimator to learn the missing mass with respect to it may seem discouraging. (Note that Theorem \ref{thm:main} holds even if the estimator \emph{is aware} that this is the class it is paired with.) However, if we restrict ourselves to smooth parametric families within the light tails then the outlook can be brighter. We illustrate this with the case of the geometric family.

\begin{theorem}\label{thm:geometric-family} Let $\mathcal G$ be the class of geometric distributions, parametrized by $\alpha\in(0,1)$:
$$
p_\alpha(x) = (1-\alpha)\alpha^{x-1},\qquad \textrm{for } x\in\N.
$$
Let $\hat{\alpha}_n = 1-\frac{n}{\sum X_i}$ be the empirical estimator of the parameter, and define the plug-in estimator:
$$
\check{M}_n = \sum_{x\in\N} (1-\hat\alpha_n){\hat \alpha_n}^x \ind\{n\hat{p}_n(x)=0\}
$$
\vspace{-6pt}
Then $\check{M}_n$ PAC-learns the missing mass in relative error with respect to $\mathcal G$.
\end{theorem}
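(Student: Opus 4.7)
The plan is to exploit the parametric structure of $\mathcal G$: since $\hat\alpha_n = 1 - n/\sum_i X_i$ is $\sqrt n$-consistent and both $M_n$ and $\check M_n$ are smooth functions of the parameter for fixed samples, I expect the only non-routine aspect to be verifying that this rate of convergence is fast enough to tame the infinite tail involved in the missing mass. (I will interpret $\check M_n$ as the plug-in $\sum_{x\in E_n} p_{\hat\alpha_n}(x)$, i.e.\ with $\hat\alpha_n^{x-1}$ inside the sum.)

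First, I would establish concentration of the parameter estimate. The $X_i$ have a finite MGF in a neighborhood of the origin, so a standard Chernoff bound yields $\Prob_\alpha(|T_n/n - 1/(1-\alpha)| > t) \leq 2 e^{-c_\alpha n t^2}$ for small $t$, where $T_n = \sum_i X_i$ and $c_\alpha>0$. Continuity of $s \mapsto 1-1/s$ at $s = 1/(1-\alpha)$ then gives, with probability at least $1-\delta/2$, $|\hat\alpha_n - \alpha| \leq \delta_n$ for some $\delta_n = O\!\bigl(n^{-1/2}\sqrt{\log(1/\delta)}\bigr)$, with constant depending on $\alpha$. Separately, a union bound gives $\Prob_\alpha(X_{(n)} \geq K) \leq n\alpha^{K-1}$, so that $X_{(n)} := \max_i X_i \leq K_n := \lceil C_\alpha \log n\rceil$ with probability at least $1-\delta/2$ for $C_\alpha$ large enough.

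Next, I would decompose $E_n = (E_n \cap \{1,\ldots,X_{(n)}\}) \sqcup \{X_{(n)}+1, X_{(n)}+2,\ldots\}$ into a bounded ``gap'' part and an explicit ``deep tail'', and write $M_n = A_n + B_n$ with $A_n := \sum_{x \in E_n,\ x \leq X_{(n)}} (1-\alpha)\alpha^{x-1}$ and $B_n := \sum_{x > X_{(n)}} p_\alpha(x) = \alpha^{X_{(n)}}$; analogously $\check M_n = \hat A_n + \hat B_n$ with $\hat B_n = \hat\alpha_n^{X_{(n)}}$. Since $\check M_n/M_n$ is the convex combination of $\hat A_n/A_n$ and $\hat B_n/B_n$ with weights $A_n/(A_n+B_n)$ and $B_n/(A_n+B_n)$, it suffices to pin each of these two ratios into $(1-\epsilon,1+\epsilon)$ on the good event from the previous step. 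The deep-tail ratio is exactly $(\hat\alpha_n/\alpha)^{X_{(n)}}$; its logarithm is bounded by $K_n \log(1+\delta_n/\alpha) = O(\delta_n K_n) = O\!\bigl(\sqrt{\log(1/\delta)/n}\cdot \log n\bigr)$, which tends to $0$ for fixed $\alpha,\delta$. For the gap part, each summand's ratio is $\bigl((1-\hat\alpha_n)/(1-\alpha)\bigr)(\hat\alpha_n/\alpha)^{x-1}$ with $x \leq X_{(n)} \leq K_n$, so the same uniform bound applies and transfers to the convex combination $\hat A_n/A_n$.

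I expect the sole delicate step to be the deep-tail ratio $(\hat\alpha_n/\alpha)^{X_{(n)}}$: it exponentially amplifies the parameter-estimation error in $X_{(n)}$, so consistency requires $\delta_n \cdot X_{(n)} \to 0$, i.e.\ the $\sqrt n$-rate of $\hat\alpha_n$ must beat the logarithmic growth of the sample maximum. It does, but precisely by the kind of comfortable margin that dissolves once the parametric model is loosened---consistent with Theorem \ref{thm:main}, and explaining why the plug-in strategy does not immediately extend to the dithered geometric family.
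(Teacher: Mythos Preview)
Your proposal is correct and follows the same high-level strategy as the paper: establish $\sqrt{n}$-concentration of $\hat\alpha_n$, control the sample maximum $X_{(n)}$ at order $\log n$, and observe that the relative error in $\check M_n/M_n$ is governed by $(\hat\alpha_n/\alpha)^{X_{(n)}}$, whose logarithm is $O(n^{-1/2}\log n)\to 0$. The identification of this last product as the crux of the argument matches the paper exactly.

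Where you differ is in the bookkeeping. The paper localizes the \emph{punctured segment} by bounding both $V_n^-=\min E_n$ and $V_n^+=\max_i X_i$, then uses a somewhat elaborate induction (based on the mediant inequality $a_1/b_1\le(a_1+a_2)/(b_1+b_2)\le a_2/b_2$) to replace the awkward sum over $E_n$ by sums over the clean tails $\{x\ge V_n^-\}$ and $\{x>V_n^+\}$, which it then evaluates via the geometric moment generating function. Your decomposition $E_n=(E_n\cap[1,X_{(n)}])\sqcup(X_{(n)},\infty)$ is more economical: since every individual ratio $p_{\hat\alpha_n}(x)/p_\alpha(x)$ with $x\le X_{(n)}$ lies in the same interval, the ratio of the gap sums inherits that interval immediately as a weighted average, and no induction or lower bound on $V_n^-$ is needed. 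The paper's route yields an explicit closed-form bound via the MGF, which is pleasant but not essential; your route is shorter and arguably more transparent about why the argument collapses once the parametric form is perturbed. One small point worth making explicit when you write this up: if the gap part is empty ($A_n=0$), the convex-combination identity degenerates, but then $\check M_n/M_n=\hat B_n/B_n$ directly and the conclusion still holds.
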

\begin{proof}[Sketch]
The proof consists of pushing forward the convergence of the parameter to that of the entire distribution using continuity arguments, and then specializing to the missing mass. The details can be found in the appendix.
\end{proof}

\subsection{\texorpdfstring{$N$-gram models and Bayesian perspectives}{N-gram models and Bayesian perspectives}}

One of the prominent applications of estimating the missing mass has been to computational linguistics. In that context, it is known as \emph{smoothing} and is used to estimate $N$-gram transition probabilities. The importance of accurately estimating the missing mass, and in particular in a relative-error sense, comes from the fact that $N$-grams are used to score test sentences using log-likeliehoods. Test sentences often have transitions that are never seen in the training corpus, and thus in order for the inferred log-likelihoods to accurately track the true log-likelihood, these rare transitions need to be assigned meaningful values, ideally as close to the truth as possible. As such, various forms of smoothing, including Good-Turing esimation, have become an essential ingredient of many practical algorithms, such as the popular method proposed by \citet{kneser-ney}.

In the context of $N$-gram learning, a separate Bayesian perspective was also proposed. One of the earliest to introduce this were \cite{mackay-peto} using a Dirichlet prior. This was shown to not be very effective, and we now understand that it is due to the fact that (1) the Dirichlet process produces light tails while language is often heavy-tailed and, even if it were, (2) rare probabilities are hard to learn for large light-tailed families. The natural progression of these Bayesian models led to the use of the two-parameter Poisson-Dirichlet prior \citep{pitman-yor}, which was suggested initially by \cite{teh}. It is worth remarking that a significant part of the contribution of these Bayesian models, beyond modeling the missing mass, were to introduce formal hierarchies, which is outside our scope. Concerning the missing mass, however, this line of work soon remarked that the inference techniques closely followed the Good-Turing estimator, albeit being computationally much more demanding. In light of the present work, this is not surprising since the two-parameter Poisson-Dirichlet process almost surely produces heavy-tailed distributions, and any two algorithms that learn the missing mass are bound to have the same qualitative behavior.

\section{Summary} \label{sec:summary}

In this paper, we have considered the problem of learning the missing mass, which is the probability of all unseen symbols in an i.i.d.\ draw from an unknown discrete distribution. We have phrased this in the probabilistic framework of PAC-learning. Our main contribution was to show that it is not possible to  learn the missing mass in a completely distribution-free fashion.

In other words, no single estimator can do well for all distributions. We have given a detailed account of the proof, emphasizing the intuition of how failure can occur in large light-tailed families. We have also placed this work in a greater context, through some discussions and extensions of the impossibility result to continuous tail probability estimation, and by showing that smaller, parametric, light-tailed families may be learnable.

An initial impetus for this paper and its core message is that assuming further structure can be necessary in order to learn rare events. Further structure, of course, is nothing more than a form of regularization. This is a familiar notion to the computational learning community, but for a long time the Good-Turing estimator enjoyed favorable analysis that focused on additive error, and evaded this kind of treatment. The essential ill-posedness of the problem was uncovered by studying relative error. But lower bounds cannot be deduced from the failure of particular algorithms. Our result thus completes the story, and we can now shift our attention to studying the landscape that is revealed.

The most basic set of open problems concerns establishing families that allow PAC-learning of the missing mass. We have seen in this paper some such families, including the heavy-tailed family learnable by the Good-Turing estimator, and simple smooth parametric families, learnable using plug-in estimators. How do we characterize such families more generally? The next layer of questions concerns establishing convergence rates, via both lower and upper bounds. The fact that a family of distributions allows learning does not mean that such rates can be established. This is because any estimator may be faced with arbitrarily slow convergence, by varying the distribution in the family. In other words we may be faced with a lack of uniformity. How do we control the convergence rate? Lastly, when learning is not possible, we may want to establish how gracefully an estimator can be made to fail. Understanding these limitations and accounting for them can be critical to the proper handling of data-scarce learning problems.


\bibliography{main}

\appendix
\section{Proof of Theorem \ref{thm:geometric-family}}

\paragraph{(Notation and outline)}  Let us first set some notation. Recall that the mean of the geometric distribution $p_\alpha(x)=(1-\alpha)\alpha^{x-1}$ is $\mu=\frac{1}{1-\alpha}$ and its variance is $\sigma^2=\frac{\alpha}{(1-\alpha)^2}$. Let us write the empirical mean and our parameter estimate respectively as follows:
$$
  \hat\mu_n=\frac{1}{n} \sum_{i=1}^{n} X_i, \quad \hat\alpha_n = 1-\frac{1}{\hat\mu_n}.
$$

The plug-in probability estimate can be expressed as:
$$
\check{p}_n(x) := (1-\hat\alpha_n){\hat \alpha_n}^{x-1}.
$$
Using our notation for the missing symbols, $E_n := \{x\in \N: \hat{p}(x)=0\},$ the missing mass is
$$
M_n=p_\alpha(E_n)= \sum_{x\in E_n} (1-\alpha){\alpha}^{x-1}
$$
and the suggested plug-in estimator can be written as
$$
\check{M}_n := \check{p}_n(E_n) = \sum_{x\in E_n} (1-\hat\alpha_n){\hat \alpha_n}^{x-1}.
$$

The following proof first establishes the convergence of the parameter estimate and then pushes it forward to the entire distribution,  specializing in particular to the missing mass. For the latter, we establish some basic localization properties of the punctured segment of a geometric sample coverage. This is related to the general study of gaps \citep[see, for example,][]{LouchardProdinger2008}.

We have the following elementary convergence property for the parameter.

\begin{lemma}[Parameter Convergence] \label{lemma:parameter}
Let $\delta>0$, and define:
$$
  \epsilon_n := \sqrt{\frac{\alpha}{\delta n}} \cdot  \left(\frac{\max\{1,\tfrac{1-\alpha}{\alpha}\}}{1-\sqrt{\frac{\alpha}{\delta n}}}\right).
$$
Then, at every $n>\tfrac{\alpha}{\delta}$, we have that with probability greater than $1-\delta$:
$$
\left|\frac{\hat\alpha_n}{\alpha}-1\right|\leq \epsilon_n\quad\textrm{ and }\quad \left|\frac{1-\hat\alpha_n}{1-\alpha}-1\right|\leq \epsilon_n.
$$

If we let $\eta_n=\epsilon_n/(1-\epsilon_n)$, we can also write this as
$$
\frac{1}{1+\eta_n} \leq \frac{\hat\alpha_n}{\alpha} \leq 1+\eta_n\quad\textrm{ and }\quad\frac{1}{1+\eta_n} \leq \frac{1-\hat\alpha_n}{1-\alpha} \leq 1+\eta_n.
$$
\end{lemma}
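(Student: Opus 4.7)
The plan is to reduce everything to the convergence of the empirical mean $\hat\mu_n$ and then translate back to the parameter via the relation $\alpha = 1 - 1/\mu$. The first step is the algebraic identity
$$
\hat\alpha_n - \alpha \;=\; \tfrac{1}{\mu} - \tfrac{1}{\hat\mu_n} \;=\; \frac{\hat\mu_n - \mu}{\mu\,\hat\mu_n},
$$
which turns parameter deviations into mean deviations (scaled by a random factor $\hat\mu_n$ that I will control separately).

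The second step is to apply Chebyshev's inequality to $\hat\mu_n$. Its variance is $\sigma^2/n = \alpha/\bigl(n(1-\alpha)^2\bigr) = \alpha\mu^2/n$, so with probability at least $1-\delta$,
$$
|\hat\mu_n - \mu| \;\leq\; \mu\sqrt{\tfrac{\alpha}{\delta n}} \;=:\; \mu\, r_n,
$$
where $r_n = \sqrt{\alpha/(\delta n)}$. The hypothesis $n > \alpha/\delta$ ensures $r_n < 1$, and on this ``good event'' we get the crucial lower bound $\hat\mu_n \geq (1-r_n)\mu > 0$, which is what keeps the denominator in the identity above away from zero.

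The third step is the translation. Plugging the good-event bounds into the identity, and using $\mu(1-\alpha)=1$,
$$
\left|\tfrac{\hat\alpha_n}{\alpha} - 1\right| \;=\; \frac{|\hat\mu_n-\mu|}{\alpha\,\mu\,\hat\mu_n} \;\leq\; \frac{r_n(1-\alpha)}{\alpha(1-r_n)}, \qquad
\left|\tfrac{1-\hat\alpha_n}{1-\alpha} - 1\right| \;=\; \frac{|\hat\mu_n-\mu|}{(1-\alpha)\,\mu\,\hat\mu_n} \;\leq\; \frac{r_n}{1-r_n}.
$$
Both of these are dominated by $\epsilon_n = r_n\,\max\{1,(1-\alpha)/\alpha\}/(1-r_n)$, which is exactly the quantity claimed. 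The equivalent multiplicative form follows from the elementary fact that $|x-1|\leq \epsilon$ with $x>0$ implies $(1+\eta)^{-1} \leq x \leq 1+\eta$ for $\eta = \epsilon/(1-\epsilon)$, applied once to $x = \hat\alpha_n/\alpha$ and once to $x = (1-\hat\alpha_n)/(1-\alpha)$.

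There is no real obstacle here; the only point that needs care is the bookkeeping of the two denominators $\alpha$ and $1-\alpha$, which forces the $\max\{1,(1-\alpha)/\alpha\}$ factor since the ratio to $\alpha$ incurs the $(1-\alpha)/\alpha$ prefactor while the ratio to $1-\alpha$ does not. One could sharpen the constant by using a Bernstein/Bennett bound in place of Chebyshev (geometric tails are sub-exponential), but Chebyshev already suffices for the qualitative statement required by Theorem~\ref{thm:geometric-family}.
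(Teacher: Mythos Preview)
Your proof is correct and follows essentially the same route as the paper's: Chebyshev on $\hat\mu_n$, the identity $\hat\alpha_n-\alpha=(\hat\mu_n-\mu)/(\mu\hat\mu_n)$, the lower bound $\hat\mu_n\geq(1-r_n)\mu$ on the good event, and then the same two algebraic bounds unified by the $\max\{1,(1-\alpha)/\alpha\}$ factor. Your presentation with the shorthand $r_n=\sqrt{\alpha/(\delta n)}$ is in fact slightly cleaner than the paper's, which carries $\sigma/\sqrt{\delta n}$ and $\mu-1$ throughout before simplifying at the end.
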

\begin{proof}
From Chebyshev's inequality, we know that for all $\delta>0$:
$$
  \Prob\left\{|\hat\mu_n-\mu|\leq \frac{\sigma}{\sqrt{\delta n}}\right\}\geq 1-\delta.
$$ 
We now simply have to verify that $|\hat\mu_n-\mu|\leq \frac{\sigma}{\sqrt{\delta n}}$ implies that both $\left|\frac{\hat\alpha_n}{\alpha}-1\right|$ and $\left|\frac{1-\hat\alpha_n}{1-\alpha}-1\right|$ are smaller than $\epsilon_n$. Indeed, using $\hat \mu_n \geq \mu-\frac{\sigma}{\sqrt{\delta n}}$:
$$
\left|\frac{\hat\alpha_n}{\alpha}-1\right| = \left|\frac{(\hat \mu_n-1)\mu}{\hat \mu_n (\mu-1)}-1\right| = \left|(\hat \mu_n-\mu)\frac{1}{\hat\mu_n(\mu-1)}\right|\leq \left|\hat \mu_n-\mu\right| \frac{1}{(\mu-\frac{\sigma}{\sqrt{\delta n}})(\mu-1)}
$$
and
$$
\left|\frac{1-\hat\alpha_n}{1-\alpha}-1\right| = \left|\frac{\mu}{\hat \mu_n}-1\right| = \left|(\mu-\hat \mu_n)\frac{1}{\hat\mu_n}\right|\leq \left|\hat \mu_n-\mu\right| \frac{1}{(\mu-\frac{\sigma}{\sqrt{\delta n}})}.
$$

Finally, since $\left|\hat \mu_n-\mu\right|\leq \frac{\sigma}{\sqrt{\delta n}}$, both of these bounds are smaller than:
$$
 \frac{\sigma}{\sqrt{\delta n}} \frac{1}{(\mu-\frac{\sigma}{\sqrt{\delta n}})\min\{1,\mu-1\}} = \frac{\frac{\sqrt{\alpha}}{1-\alpha}}{\sqrt{\delta n}} \frac{1}{(\frac{1}{1-\alpha}-\frac{\sqrt{\alpha}}{1-\alpha}\frac{1}{\sqrt{\delta n}})\min\{1,\frac{\alpha}{1-\alpha}\}},
$$
which is equal to $\epsilon_n$. The expression with $\eta_n$ follows from $1-\epsilon_n = \tfrac{1}{1+\eta_n}$ and $1+\eta_n > 1+\epsilon_n$.
\end{proof}

It follows from Lemma \ref{lemma:parameter} that with probability greater than $1-\delta$, we have the following pointwise convergence of the distribution.
$$
(1+\eta_n)^{-x} (1-\alpha) \alpha^{x-1} \leq \hat p_\alpha(x) \leq (1+\eta_n)^x (1-\alpha) \alpha^{x-1}.
$$

Since the rate of this convergence is not uniform, we need to exercise care when specializing to particular events. We focus on the missing symbols' event. We have:
\begin{equation} \label{eq:main-ratio-bound}
\frac{\sum_{x\in E_n} (1+\eta_n)^{-x} (1-\alpha) \alpha^{x-1}}{\sum_{x\in E_n}  (1-\alpha) \alpha^{x-1}} \leq \frac{\check M_n}{M_n}=\frac{\check p_n(E_n)}{p_\alpha(E_n)} \leq
\frac{\sum_{x\in E_n} (1+\eta_n)^x (1-\alpha) \alpha^{x-1}}{\sum_{x\in E_n}  (1-\alpha) \alpha^{x-1}}.
\end{equation}

The event $E_n$ is inconvenient to sum over, because it has points spread out randomly. This is particularly true for its initial portion, where the samples ``puncture'' it. It it is more convenient to approximate this segment in order to bound Equation \eqref{eq:main-ratio-bound}. We now formalize this notion, via the following definition.

\begin{definition}[Punctured Segment] \label{def:punctured}
The punctured segment of a sample is the part between the end of the first contiguous coverage and the end of the total coverage. Its extremities are:
$$
  V_n^- := \min E_n \quad\textrm{ and }\quad V_n^+ := \max E_n^\c.
$$
\end{definition}

We have the following localization property for the punctured segment of samples from a geometric distribution.

\begin{lemma}[Localization of Punctured Segment] \label{lemma:localization} Let $X_1,\cdots,X_n$ be samples from a geometric distribution $p_\alpha(x)=(1-\alpha)\alpha^{x-1}$ on $\N$. Let $V_n^-$ and $V_n^+$ be the extremities of the punctured segment as defined in Definition \ref{def:punctured}. Then, for all $u>(\tfrac{\alpha}{1-\alpha})^2$, we have:
\begin{equation*}
 \begin{aligned}
   \Prob\{ V_n^- < \log_{1/\alpha}(n)-\log_{1/\alpha} (u) \} &< 2\mathrm{e}^{-\frac{1-\alpha}{\alpha}u} < \frac{\alpha}{(1-\alpha)u},\\
   \Prob\{ V_n^+ > \log_{1/\alpha}(n)+1+\log_{1/\alpha} (u) \} &< \frac{1}{u}.
 \end{aligned}
\end{equation*}
In particular, for $\delta<(1-\alpha)/\alpha^2$, we have that with probability greater than $1-\delta$:
$$
\log_{1/\alpha}(n)-\log_{1/\alpha}\left[\tfrac{1}{(1-\alpha)\delta}\right] \leq V_n^- < V_n^+ \leq \log_{1/\alpha}(n)+1+\log_{1/\alpha}\left[\tfrac{1}{(1-\alpha)\delta}\right].
$$
\end{lemma}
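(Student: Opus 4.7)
The plan is to handle the two tail bounds separately via simple union bounds and then combine them for the consolidated statement. For the upper tail, $\{V_n^+ > k\}$ is the event that at least one $X_i$ exceeds $k$, and a union bound over the $n$ samples immediately gives $\Prob\{V_n^+ > k\} \leq n\Prob\{X_1 > k\} = n\alpha^k$. Substituting the claimed threshold $k = \log_{1/\alpha}(n) + 1 + \log_{1/\alpha}(u)$ yields $n\alpha^k = \alpha/u < 1/u$, finishing this half.

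For the lower tail, $\{V_n^- < k\}$ means that at least one integer in $\{1, \ldots, k-1\}$ is missing from the sample; union-bounding over these candidates and applying $(1-p_\alpha(j))^n \leq \exp(-n(1-\alpha)\alpha^{j-1})$ gives $\Prob\{V_n^- < k\} \leq \sum_{j=1}^{k-1} \exp(-n(1-\alpha)\alpha^{j-1})$. With $k = \log_{1/\alpha}(n) - \log_{1/\alpha}(u)$ one has $n\alpha^k = u$, so reindexing via $s = k-j$ recasts this sum as $\sum_{s=1}^{k-1} \exp(-(1-\alpha)u/\alpha^{s+1})$, and the task is to bound this doubly-exponentially-decaying sum by $2\exp(-(1-\alpha)u/\alpha)$. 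This is the main obstacle. The idea is to factor $(1/\alpha)^{s+1} = (1/\alpha)\cdot(1/\alpha)^s$ and apply Bernoulli's inequality $(1/\alpha)^s \geq 1 + s(1-\alpha)/\alpha$, dominating each summand by $\exp(-(1-\alpha)u/\alpha)\cdot\exp(-s(1-\alpha)^2 u/\alpha^2)$. Extending to $s=\infty$ gives a geometric series with ratio $\exp(-(1-\alpha)^2 u/\alpha^2)$; the hypothesis $u > (\alpha/(1-\alpha))^2$ forces $(1-\alpha)^2 u/\alpha^2 > 1$, so this ratio is at most $1/e$ and the series sums to at most $1/(1-1/e) < 2$. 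Finally, the second inequality $2e^{-y} < 1/y$ for $y>0$ follows from the elementary $e^y > 2y$.

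For the consolidated statement, the plan is to set $u = 1/((1-\alpha)\delta)$. The hypothesis $\delta < (1-\alpha)/\alpha^2$ translates precisely to $u > (\alpha/(1-\alpha))^2$, so both prior bounds apply: the upper-tail probability is at most $1/u = (1-\alpha)\delta$ and the lower-tail probability is at most $\alpha/((1-\alpha)u) = \alpha\delta$. A final union bound on the two failure events yields the claim, with total failure probability at most $\alpha\delta + (1-\alpha)\delta = \delta$.
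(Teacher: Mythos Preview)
Your proof is correct and matches the paper's approach closely: both bound $V_n^-$ by a union bound over potentially missing values, then linearize $(1/\alpha)^\ell$ (you via Bernoulli's inequality, the paper via an equivalent one-line induction) to reduce the doubly-exponential sum to a geometric series with ratio at most $1/e$, and both bound $V_n^+$ via the tail of the sample maximum (you by a union bound over the $n$ samples, the paper by the exact complement formula $1-(1-\alpha^b)^n$ plus Bernoulli), combining the two with the same choice $u=1/((1-\alpha)\delta)$. The one place where the paper is more careful is in handling the real-valued thresholds against the integer-valued $V_n^\pm$ via explicit floors and ceilings---your identity $\Prob\{X_1>k\}=\alpha^k$ is literally $\alpha^{\lfloor k\rfloor}$ for non-integer $k$---but the extra $+1$ in the upper threshold is precisely what absorbs this rounding, so your final bounds remain valid.
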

\begin{proof}

Given an integer $a\in\N$, the event that $V_n^-<a$ implies that one of the values below $a$ did not appear in the sample. By using the union bound, we thus have that:
\begin{eqnarray*}
\Prob\{V_n^-<a\}
&\leq& \sum\nolimits_{x=1}^{a-1} \left[1-(1-\alpha)\alpha^{x-1}\right]^n \\
&\leq& \sum\nolimits_{\ell=1}^{\infty} \left[1-\frac{(1-\alpha)n\alpha^{a-1-\ell}}{n}\right]^n \\
&\leq& \sum\nolimits_{\ell=1}^{\infty} \exp\left[-(1-\alpha)n\alpha^{a-1-\ell}\right]
\end{eqnarray*}

By specializing to $a(u,n)=\left\lfloor \log_{1/\alpha}(n)+1-\log_{1/\alpha} (u) \right\rfloor$:
\begin{eqnarray*}
\Prob\{ V_n^- < \log_{1/\alpha}(n)-\log_{1/\alpha} (u) \}
&\leq& \Prob\{V_n^-<a(u,n)\} \\
&\leq& \sum\nolimits_{\ell=1}^{\infty} \exp\left[-(1-\alpha)n\alpha^{\log_{1/\alpha}(n)-\log_{1/\alpha} (u)-\ell}\right] \\
&=& \sum\nolimits_{\ell=1}^{\infty} \exp\left[-(1-\alpha)\alpha^{-\ell}u\right].
\end{eqnarray*}

Lastly, if $u>(\tfrac{\alpha}{1-\alpha})^2$, one can show by induction that $(1-\alpha)\alpha^{-\ell}u>\frac{1-\alpha}{\alpha}u+\ell-1$. This turns the sum into a geometric series, giving:
\begin{equation*}
\Prob\{ V_n^- < \log_{1/\alpha}(n)-\log_{1/\alpha} (u) \}
\leq \mathrm{e}^{-\frac{1-\alpha}{\alpha}u}\sum\nolimits_{\ell=1}^{\infty} \mathrm{e}^{-\ell+1} < 2\mathrm{e}^{-\frac{1-\alpha}{\alpha}u} < \frac{\alpha}{(1-\alpha)u}.
\end{equation*}

Next, note that $V_n^+$ is nothing but the maximum of the samples. Thus, given an integer $b\in\N$, the event $V_n^+ > b$ is the complement of the event that all the samples are at $b$ or below. Since the total probability of the range $1,\cdots,b$ is $1-\alpha^b$, we thus have:
\begin{equation*}
 \Prob\{ V_n^+ > b\} = 1-(1-\alpha^b)^n.
\end{equation*}
If we now specialize to $b(u,n)=\left\lceil \log_{1/\alpha}(n)+\log_{1/\alpha} (u) \right\rceil$, we have that:
\begin{eqnarray*}
\Prob\{ V_n^+ > \log_{1/\alpha}(n)+1+\log_{1/\alpha} (u) \}
&\leq& \Prob\{ V_n^+ > b(u,n)\} \\
  &\leq& 1-\left(1-\alpha^{\log_{1/\alpha}(n)+\log_{1/\alpha} (u)}\right)^n\\
  &=& 1-\left(1-\frac{1}{u\cdot n}\right)^n < \frac{1}{u}.
\end{eqnarray*}

For the last part of the claim, we let $u=\frac{1}{(1-\alpha)\delta}$, followed by a union bound on the analyzed events. This gives us that at least one of the two events holds with probability at most $\frac{1}{u}+\frac{\alpha}{(1-\alpha)u}=\delta$, and therefore neither holds with probability at least $1-\delta$, as desired.
\end{proof}

\paragraph{(Completing the proof)} We now put together the pieces of the proof of Theorem \ref{thm:geometric-family}. To show that our estimator PAC-learns the missing mass in relative error with respect to $\mathcal{G}$, we obtain the following equivalent statement. Fix $\delta>0$ and $\eta>0$. We prove that for $n$ large enough with probability greater than $1-2\delta$ we have:
$$
\frac{1}{1+\eta} < \frac{\check M_n}{M_n} < 1+\eta.
$$

Without loss of generality, to satisfy the conditions of Lemmas \ref{lemma:parameter} and \ref{lemma:localization}, we restrict ourselves to $\delta<(1-\alpha)/\alpha^2$ (we can always choose a smaller $\delta$ than specified) and $n>\tfrac{\alpha}{\delta}$ (we can always ask for $n$ to be larger). As such, we have that with probability at least $1-2\delta$, both events of Lemmas \ref{lemma:parameter} and \ref{lemma:localization} occur. We work under the intersection of these events.

We give the details of only the right tail of the convergence; all the steps can be directly paralleled for the left tail. To see why the punctured set is a useful notion, we claim that the following quantity upper bounds the right tail of Equation \eqref{eq:main-ratio-bound}:
\begin{eqnarray}
\frac{\sum_{x> V_n^+} (1+\eta_n)^x (1-\alpha) \alpha^{x-1}}{\sum_{x> V_n^+}   (1-\alpha) \alpha^{x-1}} &=& (1+\eta_n)^{V_n^+} \frac{\sum_{y\in\N} (1+\eta_n)^y (1-\alpha) \alpha^{y-1}}{\sum_{y\in\N}  (1-\alpha) \alpha^{y-1}=1}\nonumber\\\label{eq:upper-bound}
&=& (1+\eta_n)^{V_n^+} \frac{(1-\alpha)(1+\eta_n)}{1-\alpha(1+\eta_n)}.
\end{eqnarray}
where for the first equality we have used the change of variable $y=x-V_n^+$ and simplified the common $\alpha$ factors in the numerator and denominator, and for the second equality we have used the moment generating function of the geometric distribution: $\E[\mathrm{e}^{sX}]=(1-\alpha)\mathrm{e}^s/(1-\alpha\mathrm{e}^s)$. To prove this claim, we proceed by induction, starting at step $t=1$ with the set $G^{(1)}:=\{V_n^+ +1,V_n^+ +2,\cdots\} \subset E_n$, adding at every step $t$ the largest element $z^{(t)}$ of $E_n$ not yet in $G^{(t-1)}$ to obtain $G^{(t)}$, and proving that:
$$
\frac{\sum_{x\in G^{(t)}} (1+\eta_n)^x (1-\alpha) \alpha^{x-1}}{\sum_{x\in G^{(t)}} (1-\alpha) \alpha^{x-1}} \leq \frac{\sum_{x\in G^{(t-1)}} (1+\eta_n)^x (1-\alpha) \alpha^{x-1}}{\sum_{x\in G^{(t-1)}} (1-\alpha) \alpha^{x-1}}.
$$

We use the following basic property that for positive real numbers $a_1,b_1,a_2,b_2$, the following three equalities are equivalent:
\begin{equation*}
 \begin{array}{rrcl}
    (i)&\quad a_1/b_1 		&\leq& a_2/b_2,\\
   (ii)&\quad a_1/b_1 		&\leq& (a_1+a_2)/(b_1+b_2),\\
  (iii)&\quad (a_1+a_2)/(b_1+b_2)&\leq& a_2/b_2.
 \end{array}
\end{equation*}

For the base case, let $a_2=\sum_{x\in G^{(1)}} (1+\eta_n)^x (1-\alpha) \alpha^{x-1}$ and $b_2=\sum_{x\in G^{(1)}} (1-\alpha) \alpha^{x-1}$. We then choose the largest $z^{(1)}\in E_n \setminus G^{(1)}$ and we let $a_1=(1+\eta_n)^{z^{(1)}} (1-\alpha) \alpha^{z^{(1)}-1}$ and $b_1=(1-\alpha) \alpha^{z^{(1)}-1}$. From \eqref{eq:upper-bound}, noting that the fraction is always greater than $1$, it follows that $a_2/b_2 > (1+\eta_n)^{V_n^+}>(1+\eta_n)^{z^{(1)}} = a_1/b_1$. We can thus add $z^{(1)}$ to the sum, and obtain $(a_1+a_2)/(b_1+b_2)\leq a_2/b_2$, establishing the base case. Note that this also shows that $(a_1+a_2)/(b_1+b_2)\geq a_1/b_1 = (1+\eta_n)^{z^{(1)}}$. We pass this property down by induction, and we can assume this holds true at every step.

To continue the induction at step $t$, let $a_2=\sum_{x\in G^{(t-1)}} (1+\eta_n)^x (1-\alpha) \alpha^{x-1}$ and $b_2=\sum_{x\in G^{(t-1)}} (1-\alpha) \alpha^{x-1}$. As noted, we assume that $a_2/b_2 \geq (1+\eta_n)^{z^{(t-1)}}$ from the previous induction step. We then choose the largest $z^{(t)}\in E_n \setminus G^{(t-1)}$ and we let $a_1=(1+\eta_n)^{z^{(t)}} (1-\alpha) \alpha^{z^{(t)}-1}$ and $b_1=(1-\alpha) \alpha^{z^{(t)}-1}$. Since $z^{(t-1)}<z^{(t)}$, it follows that $a_2/b_2 \geq (1+\eta_n)^{z^{(t-1)}} > (1+\eta_n)^{z^{(t)}} = a_1/b_1$. We can thus add $z^{(t)}$ to the sum, and obtain $(a_1+a_2)/(b_1+b_2)\leq a_2/b_2$, as desired. Note  that this also shows that $(a_1+a_2)/(b_1+b_2)\geq a_1/b_1 = (1+\eta_n)^{z^{(t)}}$, and the induction is complete.

By combining this result with the equivalent argument on the left side, we have effectively shown that we can replace Equation \eqref{eq:main-ratio-bound} by
\begin{equation*}
\frac{\sum_{x\geq V_n^-} (1+\eta_n)^{-x} (1-\alpha) \alpha^{x-1}}{\sum_{x\geq V_n^-}  (1-\alpha) \alpha^{x-1}} \leq \frac{\check M_n}{M_n}=\frac{\check p_n(E_n)}{p_\alpha(E_n)} \leq
\frac{\sum_{x> V_n^+} (1+\eta_n)^x (1-\alpha) \alpha^{x-1}}{\sum_{x> V_n^+}  (1-\alpha) \alpha^{x-1}}
\end{equation*}
or equivalently by
\begin{equation} \label{eq:new-ratio-bound}
(1+\eta_n)^{-V_n^-+1} \frac{(1-\alpha)(1+\eta_n)^{-1}}{1-\alpha(1+\eta_n)^{-1}} \leq \frac{\check M_n}{M_n} \leq
(1+\eta_n)^{V_n^+} \frac{(1-\alpha)(1+\eta_n)}{1-\alpha(1+\eta_n)}.
\end{equation}


In Lemma \ref{lemma:parameter} we have set:
$$
\eta_n=\epsilon_n/(1-\epsilon_n),
$$
with
$$
  \epsilon_n := \sqrt{\frac{\alpha}{\delta n}} \cdot  \left(\frac{\max\{1,\tfrac{1-\alpha}{\alpha}\}}{1-\sqrt{\frac{\alpha}{\delta n}}}\right).
$$

On the other hand, by Lemma \ref{lemma:localization}, we have that:
$$
V_n^+ \leq \log_{1/\alpha}(n)+1+\log_{1/\alpha}\left[\tfrac{1}{(1-\alpha)\delta}\right]
$$
and
$$
V_n^- \geq \log_{1/\alpha}(n)-\log_{1/\alpha}\left[\tfrac{1}{(1-\alpha)\delta}\right].
$$

It follows that both bounds of Equation \eqref{eq:new-ratio-bound} converge to $1$, at the rate of roughly $\log(n)/\sqrt{n}$, instead of the parametric rate $1/\sqrt{n}$. Regardless, for any desired $\eta>0$, we get that there exists a large enough $n$ beyond which, with probability greater than $1-2\delta$, we satisfy:
$$
\frac{1}{1+\eta} \leq \frac{\check M_n}{M_n} \leq 1+\eta.
$$
This establishes that $\check M_n$ PAC-learns $M_n$, as desired. \hfill{$\blacksquare$}

\end{document}